\pdfoutput=1

\documentclass{article}

\usepackage{microtype}
\usepackage{graphicx}
\usepackage{subfigure}
\usepackage{booktabs}
\usepackage{amsmath}
\usepackage{amssymb}
\usepackage{amsthm}
\usepackage{multirow}
\usepackage{xcolor}
\usepackage{algorithm}
\usepackage{algorithmic}

\usepackage{hyperref}
\makeatletter
\@ifundefined{theHalgorithm}%
  {}%
  {}
\makeatother

\usepackage[accepted]{mlsys2026}
\makeatletter
\renewcommand{\mlsys@appearing}{\textit{Preprint.}}
\makeatother

\newtheorem{proposition}{Proposition}

\newcommand{\method}{TreeSpark}
\newcommand{\tautok}{\ensuremath{\tau}}

\mlsystitlerunning{TreeSpark}

\begin{document}

\twocolumn[
\mlsystitle{\method: Calibrated, Load-Adaptive Draft Trees for\\Semi-Autoregressive Speculative Decoding}

\begin{mlsysauthorlist}
\mlsysauthor{Huapeng Zhou}{boson}
\mlsysauthor{Huayu Wang}{uw}
\mlsysauthor{Xinyu Wang}{boson,mcgill}
\end{mlsysauthorlist}

\mlsysaffiliation{boson}{Boson AI}
\mlsysaffiliation{uw}{University of Washington}
\mlsysaffiliation{mcgill}{McGill University}
\mlsyscorrespondingauthor{Huapeng Zhou}{zhouhp.me@gmail.com}

\mlsyskeywords{speculative decoding, draft trees, LLM inference, serving}

\vskip 0.3in

\begin{abstract}
Speculative decoding accelerates language-model inference by letting a
cheap drafter propose tokens that the target model verifies in parallel.
Recent block drafters make drafting nearly free: a single backbone pass
emits an entire block of draft tokens. Draft trees promise a further
gain --- several alternative continuations verified in one target
forward --- but existing constructions rank candidates by per-position
marginals that ignore which parent a candidate extends, so on
semi-autoregressive drafters wider trees mostly add mis-ranked nodes; and
a tree of fixed size ignores how much speculation each decoding round,
and each serving load, can support. We introduce \method{}, which reads a
parent-conditioned distribution from the drafter's existing Markov head
at negligible cost, calibrates it into an edge-acceptance estimate, and
lets path survival govern everything else: best-first expansion,
per-round stopping, and a load-adaptive serving policy. Sampling siblings
without replacement, with matching residuals in recursive rejection,
keeps decoding lossless at any temperature. Adaptive trees improve on
matched fixed budgets at every temperature; against a tuned chain on the
same drafter, \method{} accepts $15$--$25\%$ more draft tokens per round
and decodes $8$--$14\%$ faster in single-request wall-clock, and under
rising load it gracefully shrinks the tree back to the chain. Code and
artifacts: \url{https://github.com/PopSoda2002/TreeSpark}.
\end{abstract}
]

\printAffiliationsAndNotice{}
\chead{\small\bf \method: Calibrated, Load-Adaptive Draft Trees}

\section{Introduction}
\label{sec:intro}

\begin{figure*}[t]
\centering
\includegraphics[width=0.92\textwidth]{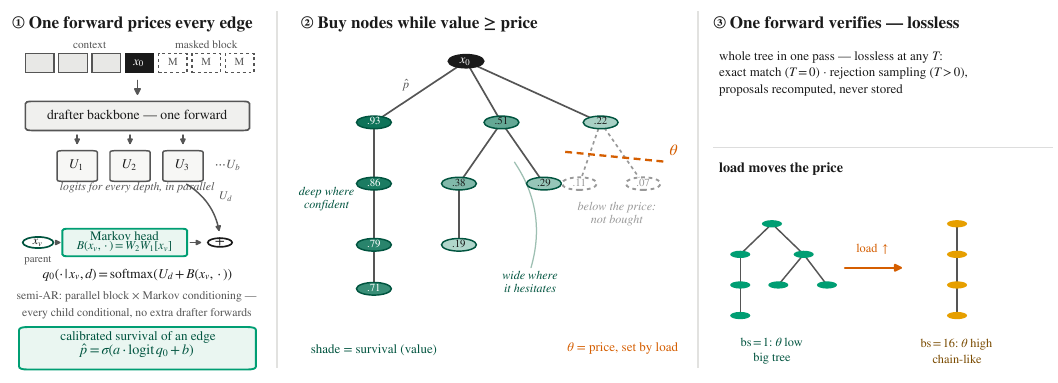}
\caption{One round of \method{}. \textbf{(1)}~A shared drafter backbone
produces block logits, while the rank-$r$ Markov head supplies a
parent-specific conditional distribution. \textbf{(2)}~Products of
calibrated edge-acceptance probabilities estimate path survival, and
best-first expansion stops when the best remaining candidate falls below
$\theta$. \textbf{(3)}~A single target forward verifies the resulting tree
by exact matching at $T{=}0$ or recursive rejection sampling at $T{>}0$.
The runtime adjusts $\theta$ as load changes, shrinking the tree toward the
chain when verification becomes more expensive.}
\label{fig:overview}
\end{figure*}

Speculative decoding reduces the number of target-model forward passes by
drafting several tokens and verifying them together
\citep{leviathan2023fast,chen2023accelerating}. Its performance depends on
both the number of tokens committed in each round and the cost of drafting
and verifying them. These quantities need not move together: acceptance
varies across token positions, requests, and datasets, while target
verification often dominates runtime \citep{liu2026illusion}. Verifying more
candidates may therefore increase accepted length yet reduce end-to-end
throughput.

A conventional draft is a chain: each token depends on its predecessor, but
one rejection invalidates the remaining suffix. A draft tree instead covers
multiple continuations around uncertain prefixes and verifies them in a
single target forward pass using a tree attention mask
\citep{miao2024specinfer,cai2024medusa}. That broader coverage requires more
verified tokens, and its benefit diminishes as batching makes verification
compute-bound \citep{liu2026illusion}. The right tree size consequently
depends on both the current decoding round and the current system load.

Parallel block drafters introduce a second challenge. Their backbone emits
one marginal distribution per depth, independent of the token selected at
the preceding depth. Building a tree directly from these marginals assigns
the same child distribution to different parents, so additional nodes need
not represent plausible continuations of the branches on which they appear.
An effective one-pass tree therefore needs both parent-specific conditional
distributions and a rule for deciding how many candidates to verify.

\method{}, illustrated in Figure~\ref{fig:overview}, obtains the
parent-specific distributions from the semi-autoregressive factorization of
DSpark \citep{dspark2026} and builds its per-round budget rule from calibrated
acceptance estimates. One backbone pass produces base logits $U_d$ for every
draft position. For a candidate at depth $d$ with parent token $x_p$,
DSpark's existing Markov head adds a low-rank transition bias,
\begin{equation}
q_{0,d}(\cdot\mid x_p)=\mathrm{softmax}(U_d+B(x_p,\cdot)).
\end{equation}
Evaluating another parent requires only an embedding lookup and a rank-$r$
matrix--vector product, rather than another backbone pass. The shared base
logits can thus support a tree while retaining the predecessor conditioning
used by the original chain. Without this correction, a marginal tree built
on the co-trained DSpark backbone underperforms the chain even with
$8\times$ its verification budget (\S\ref{sec:eval-main}).

Conditional probability alone is not a calibrated node value. Although raw
$q_0$ ranks edges well, it overestimates how often they match the target.
DSpark's confidence head does not resolve this problem because it never
observes the candidate token. A target verification pass, however, labels
every materialized edge. We use these labels to fit
$\hat p=\sigma(a\,\mathrm{logit}(q_0)+b)$ on edges whose ancestors were
accepted---the population relevant to path survival. The Qwen3-4B fit attains
held-out ECE $0.0105$. Each drafter--target pair requires only two fitted parameters,
which are then reused across domains and sampling temperatures. \method{}
prioritizes candidates by the product of $\hat p$ along each path and stops
when the best remaining product falls below $\theta$. The verification budget
is therefore chosen for each round, while the serving runtime selects
$\theta$ from a measured, load-dependent ladder
(\S\ref{sec:value}--\S\ref{sec:serving}).

Nonzero-temperature decoding also requires a correct sampling procedure.
Deterministic top-$k$ children are not proposal samples and therefore bias a
recursive rejection walk. \method{} instead draws siblings without
replacement from the residual $q_T$. During verification, it reconstructs
each proposal from the shared logits and sibling draw order before applying
recursive rejection sampling. This procedure preserves the target
distribution for any tree produced by the stopping rule. In a high-entropy
test, deterministic top-$k$ verification has total variation $0.46$, whereas
\method{} has empirical deviation comparable to the finite-sample reference
from direct target sampling (\S\ref{sec:sampling}).

Experiments on Qwen3-4B/8B/14B and six benchmarks show that parent
conditioning yields $9$--$16\%$ more accepted tokens per target forward
than DDTree at the same budget. At $T\in\{0,0.5,1\}$, adaptive stopping
lies above the matched fixed-budget acceptance--verification-cost frontier.
On Qwen3-4B, the final implementation improves accepted length by
$15$--$25\%$ and runs $8$--$14\%$ faster than the DSpark chain in
single-request wall-clock measurements. Serving measurements further show
why budget must remain a runtime decision: a large fixed tree that performs
well at batch size one becomes net-negative under load, whereas the
controller reduces tree size and eventually selects the chain.

\textbf{Contributions.}
\begin{itemize}
\item We construct a one-pass conditional tree by evaluating DSpark's Markov
  head for each concrete parent, without retraining or additional backbone
  passes (\S\ref{sec:method}).
\item We calibrate edge-acceptance probabilities and use path survival to set
  the verification budget per round, exposing $\theta$ as a load-dependent
  runtime control (\S\ref{sec:value}, \S\ref{sec:serving}).
\item We combine without-replacement expansion with recursive rejection
  sampling to preserve the target distribution exactly at nonzero
  temperature (\S\ref{sec:sampling}).
\item We evaluate three target-model scales on six to nine tasks under greedy
  and sampled decoding, with single-request and batched execution and
  ablations of conditioning, calibration, and fixed budgets
  (\S\ref{sec:eval}).
\end{itemize}

\section{Background and Related Work}
\label{sec:related}

\paragraph{Chain speculation and tree verification.}
Classical speculative decoding drafts one continuation and accepts a prefix
with a rejection sampler that preserves the target distribution
\citep{leviathan2023fast,chen2023accelerating}. SpecInfer represents several
continuations as a prefix-sharing token tree and verifies them in one target
forward pass \citep{miao2024specinfer}, while Medusa predicts multiple
children with parallel heads \citep{cai2024medusa}. Subsequent work improves
tree shape: EAGLE-2 grows dynamic trees from draft confidence
\citep{li2024eagle2}, and OPT-Tree and Sequoia optimize topology under
acceptance and hardware models \citep{wang2024opttree,chen2024sequoia}. In a
serving system, however, the additional verification work must also be
accounted for. A production-scale study finds that target verification
dominates runtime and that the small-batch advantage of wide trees can
disappear as batch size increases \citep{liu2026illusion}. \method{} therefore
treats tree size as a runtime decision rather than fixing one topology for
all rounds and loads.

\paragraph{One-pass and semi-autoregressive drafters.}
A drafter must balance proposal quality against its own latency. PRISM
refactors parameter use in autoregressive draft models to improve this
trade-off \citep{wang2026prism}. SpecDiff-2 uses discrete diffusion for
parallel drafting and trains the drafter to agree more closely with the
verifier \citep{sandler2026specdiff2}. DFlash produces a block in one
lightweight diffusion pass, with a marginal distribution at each position
\citep{chen2026dflash}. Domino restores causal dependence with a GRU
correction \citep{domino2026}, whereas DSpark uses a low-rank Markov
correction based on the immediately preceding token \citep{dspark2026}.
\method{} reuses DSpark's released factorization: it shares the parallel
backbone logits across the tree and evaluates the Markov correction for each
parent. It neither retrains nor realigns the drafter; calibration only
estimates whether the target will accept a particular edge proposed by the
fixed drafter.

\paragraph{Trees from parallel drafts.}
DDTree grows a best-first, fixed-budget tree directly from DFlash's
per-position marginals \citep{ddtree2026}. TAPS adds target-aware path
selection \citep{taps2026}; CaDDTree introduces a cost-aware stopping rule
under a verification-latency model \citep{caddtree2026}; and TreeFlash learns
an auxiliary approximation to autoregressive conditioning
\citep{treeflash2026}. SpecBlock adds iterative block drafting
\citep{specblock2026}, while Spec-AUF changes masked-drafter training to match
accept-until-fail inference \citep{specauf2026}. These methods modify
candidate selection, stopping, drafting, or training for a parallel drafter,
but a marginal backbone still
assigns the same distribution to distinct parents at a given depth. In our
ablation, applying such a marginal construction to DSpark underperforms its
original chain. Evaluating the available Markov correction separately for
each parent eliminates this failure mode.

\paragraph{Concurrent parent-conditioned parallel trees.}
PCTree is architecturally closest to our conditional expansion: it also
reuses DSpark's Markov head to score concrete parents and batches frontier
expansions. It operates under a fixed node budget and reports greedy
verification \citep{pctree2026}. DominoTree makes the analogous change to
Domino by using the GRU-corrected conditional, rather than a marginal score,
in its heap \citep{dominotree2026}. Its sampled decoder uses deterministic
drafts with exact-match acceptance, and its adaptive experiment uses
uncalibrated path probabilities. JetSpec instead trains a causal parallel
head for conditioned tree drafting \citep{jetspec2026}. Together, these
concurrent works show that one-pass trees benefit from parent-conditioned
scores. \method{}
focuses on the complementary questions of budget control and sampling
correctness: it calibrates acceptance on the survival-conditioned population,
stops using the resulting absolute path value, and applies recursive
rejection sampling to genuine proposals at nonzero temperature.

\paragraph{Speculation under serving load.}
Speculation can lose goodput when concurrency makes verification
compute-bound \citep{liu2024smartspec,liu2026illusion}. TurboSpec and
AdaSpec adapt chain length to load or latency targets
\citep{liu2024smartspec,adaspec2025}; DSpark likewise schedules chain length
from confidence and engine cost. SparseSpec co-designs self-speculation with
batching, delayed verification, and KV management \citep{zhao2026specgen},
while CaDDTree chooses the size of a marginal tree from a latency model.
\method{} estimates the marginal value of each additional conditional-tree
node and exposes the stopping threshold to the runtime. Our measured engine
uses a threshold ladder indexed by $(\text{batch size},T)$ and falls back to
the chain when no branching configuration improves wall-clock performance
(\S\ref{sec:serving}).

\section{\method{}: Markov-Conditioned Draft Trees}
\label{sec:method}

\subsection{Preliminaries: the semi-autoregressive drafter}
\label{sec:prelim}

A DSpark drafter attached to a target LLM has three components: (i) a small
bidirectional backbone over a block of $b$ positions, conditioned on features
from $t$ target layers; (ii) a rank-$r$ Markov head
$B(x, v) = \langle W_1[x], W_2[v]\rangle$, where $r \ll |V|$; and (iii) a
confidence head $c(h, x) = \sigma(w^\top [h; W_1[x]])$. A single backbone
pass over $[x_{\text{anchor}}, \texttt{[M]}, \dots, \texttt{[M]}]$ produces
hidden states $h_1, \dots, h_b$ and base logits $U_1, \dots, U_b$.

\textbf{Block semantics.} We use one-based indices in the paper to match
Figure~\ref{fig:overview}; the implementation tensors are zero-based. In a
checkpoint tensor, row $m{-}1$ predicts the draft token at offset $m$ after
the anchor, so each block produces $b$ drafts. This LM-shifted alignment must
be made explicit because an
in-place interpretation can appear to work on DSpark: the Markov head alone
acts as a bigram model and can conceal the offset error in end-to-end tests.
Appendix~\ref{app:probe} reports an alignment probe that distinguishes the
two interpretations.

The chain decoder samples token $k$ from
$\mathrm{softmax}(U_k + B(x_{k-1}))$. The backbone provides parallel
context features, and the Markov head contributes first-order sequential
dependence.

\subsection{Path-conditioned best-first expansion}
\label{sec:expansion}

Each node in a draft tree has a depth $d \in \{1,\dots,b\}$ relative to the
anchor and a parent token $x_p$. From a single backbone pass, \method{}
constructs this tree in best-first order, subject to a node cap $N$:

\begin{enumerate}
\item \textbf{Child distributions.} A candidate token at depth $d$, whose
  parent token is $x_p$, is drawn from
  $q_{0,d}(\cdot \mid x_p) =
  \mathrm{softmax}(U_d + B(x_p, \cdot))$. Evaluating
  $B(x_p,\cdot)$ requires one row lookup from $W_1$ and one
  $|V| \times r$ matrix--vector product, but no additional drafter pass.
  Candidates on distinct branches at the same depth therefore receive distinct,
  parent-conditioned distributions.
\item \textbf{Priorities.} A max-heap holds candidate children keyed by
  \emph{calibrated path survival}, the product of
  $\hat{p}(\text{edge})$ (Eq.~\ref{eq:phat}, \S\ref{sec:value}) along the
  path from the root. This product estimates the probability that the entire
  path will be accepted and hence the node's marginal contribution to
  expected accepted length. The heap selects the globally best candidate.
  At $T{=}0$, the candidate is the highest-probability unused token under
  its parent's conditional; at $T{>}0$, it is drawn without replacement
  (\S\ref{sec:sampling}). Materializing a node creates one slot for its first
  child and, up to a cap of $K$ siblings, one slot for its parent's next
  child. Expansion continues until the stopping rule in \S\ref{sec:adapt}
  fires; a fixed budget $N$ is the special case $\theta{=}0$. Raw path
  log-probability $\sum \log q$ and the trained confidence head are evaluated
  only as ablations (\S\ref{sec:eval-main}, \S\ref{sec:value}).
\item \textbf{Verification.} The flattened tree is verified in one target
  forward pass using a standard tree attention mask and per-depth position
  ids. The decoder commits the longest surviving root path---using exact
  target-token matching for greedy decoding or the procedure in
  \S\ref{sec:sampling} for
  sampling---followed by one correction or bonus token from the target
  posterior. As in chain DSpark, the hidden states on the accepted path become
  the target features for the next round.
\end{enumerate}

Algorithm~\ref{alg:expand} summarizes the procedure. Greedy and sampled
decoding differ only in how a child token is selected (line~10 versus
line~12). At $T{>}0$, the heap uses the admissible pre-draw bound
$S(v)\,\hat p(\max q_0\text{ unused})$, and queued parents are processed in
one batched GPU operation. Because a sampled value is never used to decide
whether its own slot is admitted, batching does not affect the losslessness
argument in \S\ref{sec:sampling}.

\begin{algorithm}[t]
\caption{\method{} tree expansion (one round)}
\label{alg:expand}
\begin{algorithmic}[1]
\REQUIRE anchor $x_0$, target features $F$, block length $b_{\rm blk}$,
price $\theta$, temperature $T$, caps $(N, K)$; Markov head
$B(\cdot,\cdot)$, calibration $(a,b)$
\STATE $U_{1:b_{\rm blk}} \gets
  \mathrm{Backbone}(F, [x_0, \texttt{[M]}, \dots, \texttt{[M]}])$
  \hfill $\triangleright$ one drafter forward
\STATE set the root depth to $d{=}0$; for any parent $v$ at depth $d$, define
  $q_{0,d+1}(\cdot\,|\,v) \gets
  \mathrm{softmax}(U_{d+1} + B(x_v,\cdot))$;\;
  if $T{>}0$, define $q_{T,d+1}$ likewise from logits$/T$;\;
  $\hat p(\cdot\,|\,v) \gets \sigma(a\,\mathrm{logit}\,q_{0,d+1} + b)$
  \hfill $\triangleright$ score with $q_0$, draw with $q_T$
\STATE $S(\mathrm{root}) \gets 1$;\;
  $H \gets \{\mathrm{slot}(\mathrm{root})\}$;\; $\mathcal{T} \gets \emptyset$
\WHILE{$|\mathcal{T}| < N$ \textbf{and} $H \neq \emptyset$}
  \STATE pop from $H$ the slot of parent $v$ maximizing the candidate
    survival $S(v)\cdot\hat p(\text{best unused token}\,|\,v)$
  \IF{that survival $< \theta$}
    \STATE \textbf{break}
    \hfill $\triangleright$ marginal node below its verification price
  \ENDIF
  \IF{$T = 0$}
    \STATE $x_c \gets$ best unused token of $q_{0,d+1}(\cdot\,|\,v)$
  \ELSE
    \STATE $x_c \sim q_{T,d+1}(\cdot\,|\,v)$ restricted to unused tokens
    \hfill $\triangleright$ without replacement
  \ENDIF
  \STATE $S(c) \gets S(v)\cdot\hat p(x_c\,|\,v)$;\;
    add $c = (x_c, d{+}1)$ to $\mathcal{T}$ with parent $v$
  \STATE push $\mathrm{slot}(c)$ if $d{+}1 < b_{\rm blk}$;\;
    re-push $\mathrm{slot}(v)$ if $v$ has $< K$ children
\ENDWHILE
\STATE \textbf{return} $\mathcal{T}$ as (tokens, parents, depths);
  verification recomputes each visited parent's conditionals from
  $(U,\,\text{draw order})$, using exact matching at $T{=}0$ and recursive
  rejection sampling at $T{>}0$
\end{algorithmic}
\end{algorithm}

Our primary ablation is a \emph{marginal} tree that scores children using
$\mathrm{softmax}(U_d)$ alone, transplanting DDTree's construction onto
the DSpark drafter. Because the backbone was co-trained with the Markov head,
these uncorrected marginals misrank branches and spend verification budget on
unlikely paths (\S\ref{sec:eval-main}).

\subsection{Lossless sampled trees}
\label{sec:sampling}

Greedy tree verification reproduces greedy target decoding by construction.
Sampling requires more care because an incorrect proposal procedure can
silently change the output distribution.

\textbf{The failure mode.} A tempting approach is to expand the tree
deterministically with the top-$k$ children and then apply the standard
recursive rejection walk: accept each child $x$ with probability
$\min(1,p(x)/q(x))$ and subtract $q$ from the target residual after a
rejection \citep{miao2024specinfer}. This procedure is biased because
recursive rejection requires candidates sampled from the proposal, not
deterministically selected modes. Table~\ref{tab:lossless} shows that the
bias can be large. When $q \approx p$, the first top-$k$ candidate is almost
always accepted, concentrating the output near the argmax.

\textbf{Without-replacement sampled expansion.} For children at depth $d$,
\method{} draws the $j$th sibling from the \emph{residual proposal}
\begin{equation}
q_{T,d}^{(j)} \;=\; \mathrm{norm}\!\big(q_{T,d} \cdot \mathbb{1}[\,\cdot \notin
\{x^{(1)},\dots,x^{(j-1)}\}\,]\big).
\end{equation}
Here, previously drawn siblings are removed from the node's conditional
$q_{T,d}$, and the remaining mass is renormalized. Verification reconstructs
the same $q_{T,d}^{(j)}$ bit-for-bit from the backbone output and draw order rather
than storing it (\S\ref{sec:eval-temp2}). It visits siblings in draw order,
accepts $x^{(j)}$ with probability
$\min(1,p_j(x^{(j)})/q_{T,d}^{(j)}(x^{(j)}))$, and after rejection updates the
target residual to $p_{j+1}=\mathrm{norm}(\max(p_j-q_{T,d}^{(j)},0))$.

\begin{proposition}
\label{prop:lossless}
For any node, any number of siblings, and any adaptive policy whose choice of
the next node to expand depends only on already-materialized values, the tree
walk above emits tokens drawn exactly from the target distribution $p$, provided
that each new slot is admitted before its token value is observed and that
draws are neither discarded nor reordered according to their values.
\end{proposition}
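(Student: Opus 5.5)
The argument reduces to a single node and then lifts to the whole tree by induction on depth. Fix a node $v$ whose root path has been accepted, let $p$ be the target conditional at $v$, and let $q=q_{T,d}(\cdot\mid x_v)$ be the drafter conditional. Suppose $v$ ends up with $k$ materialized children, drawn in order $x^{(1)},\dots,x^{(k)}$ from the residual proposals $q^{(j)}$. The single-node claim is that the emitted token, which is either the first accepted sibling or, if all $k$ are rejected, a correction drawn from $p_{k+1}$, has law exactly $p$.

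I would prove the single-node claim by induction on the number of remaining siblings, using the invariant
\[
\Pr\bigl[\text{reach sibling } j\bigr]\cdot p_j \;+\; \sum_{i<j}\Pr\bigl[\text{accept at } i,\ \cdot\,\bigr] \;=\; p .
\]
The inductive step is the standard one-step rejection identity. A draw $x\sim q^{(j)}$ accepted with probability $\min(1,p_j(x)/q^{(j)}(x))$ yields the sub-measure $\min(p_j,q^{(j)})$. Rejection occurs with probability $1-\sum\min(p_j,q^{(j)})=\sum(p_j-q^{(j)})^+$, and conditional on rejection the next target residual is $p_{j+1}=\mathrm{norm}((p_j-q^{(j)})^+)$. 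Since $\min(p_j,q^{(j)})+(p_j-q^{(j)})^+=p_j$, the invariant is preserved. When the walk stops, the bonus or correction is sampled from $p_{k+1}$, which closes the identity.

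The one point that needs attention is that the residual proposals are not independent: $q^{(j)}$ depends on the earlier draws $x^{(1)},\dots,x^{(j-1)}$. I would therefore condition on the whole history $\mathcal H_j$ (earlier draws and rejections). Given $\mathcal H_j$, the draw $x^{(j)}$ is distributed exactly as $q^{(j)}$, because nothing is discarded or reordered by value, and verification reconstructs the same $q^{(j)}$. The one-step identity then holds conditionally on $\mathcal H_j$.

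Integrating over histories requires showing that $p_j$ itself carries the right conditional meaning. The claim is that, conditional on $\mathcal H_j$ and on reaching sibling $j$, the target token is distributed as $p_j$. I would establish this with the coupling view: $p_j$ is exactly the law of the target token conditioned on all previous rejections. Alternatively, I would verify it directly as the SpecInfer-style multi-draft argument adapted to without-replacement proposals. This requires checking that $(p_j-q^{(j)})^+$ correctly accounts for the fact that $q^{(j)}$ puts zero mass on earlier draws. That is harmless, since on those tokens $(p_j-q^{(j)})^+=p_j$, which is already zero or is handled correctly by the one-step identity.

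The adaptive policy and the value of $k$ come next. The number of children $k$ is a stopping time with respect to the materialized history. By hypothesis, the decision to admit slot $j$ is made before $x^{(j)}$ is observed, so it depends only on $\mathcal H_j$ and on other branches' values, never on $x^{(j)}$. This is the main obstacle. I would handle it with an optional-stopping argument: the invariant holds at every $j$, and the stopping decision is measurable before the draw. The stopped walk therefore still emits exactly $p$, because at each step we either continue with a conditionally correct proposal or stop and sample $p_{k+1}$, both of which preserve the mixture identity.

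Values observed elsewhere in the tree, such as other branches or batched frontier draws, are independent of $v$'s fresh randomness given the history, so they can only influence $k$ through admissible information. Finally, I would chain the nodes. Conditioned on the accepted prefix, the next token is distributed as $p(\cdot\mid\text{prefix})$. If it matches a child, the walk recurses into that subtree, where the same argument applies. By the chain rule, the emitted sequence has the target joint distribution.
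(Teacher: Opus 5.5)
Your proposal is correct and takes the same route as the paper's sketch: each sibling stage is an ordinary one-step speculative sampling step for the residual target $p_j$ given the rejection history, the stages compose over siblings and recurse over depth, and adaptivity only chooses which admitted proposal sequence gets extended. Your explicit conditioning on the history and stopping-time treatment of $k$ fill in what the sketch leaves implicit, with one tightening: take $\mathcal H_j$ to be the entire tree-construction history at the admission time of slot $j$, plus the verifier's uniforms. This is needed because values drawn elsewhere after $x^{(j)}$ can depend on $x^{(j)}$ through the policy, so they are not independent of it; they affect only later admissions, where they already belong to the history.
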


\emph{Proof sketch.} At stage $j$, conditioned on all preceding siblings
being rejected, the residual target $p_j$ is fixed and
$x^{(j)}\sim q_{T,d}^{(j)}$. This is an ordinary one-step speculative sampling
operation and therefore preserves $p_j$ by the standard argument
\citep{leviathan2023fast}. Composing stages over siblings and recursing over
depth preserves $p$. The tree shape may depend on values already drawn,
because this only chooses which valid proposal sequence to extend. In
contrast, discarding or reordering a draw according to its value would
violate the requirement that the stage-$j$ candidate follow
$q_{T,d}^{(j)}$.
\hfill$\square$

Without-replacement draws also improve efficiency: siblings cannot duplicate
one another, which redistributes the depth--sibling budget as candidates are
removed. In our
initial implementation, i.i.d.\ draws with replacement caused sampled trees
to underperform the chain (\tautok{} $2.10$ versus $4.94$ at equal budget in
an isolated H200 rerun over 150 prompts and three repetitions).
Without-replacement expansion removes this collapse
(\S\ref{sec:eval-temp}).

\subsection{What predicts edge acceptance}
\label{sec:value}

Expansion order, branching, and total tree size should all reflect the
probability that the target will accept an edge. Tree verification supplies
the required labels without an extra target pass. Under a tree attention
mask, the row for node $i$ attends only to its root path, so the verifier
reveals whether every materialized node---accepted or not---matches the
target model's greedy prediction at its parent. We instrument greedy
Markov-tree decoding
with $N{=}28$ and $K{=}8$ on 8 prompts from each of six benchmarks, yielding
34{,}216 labeled edges for Qwen3-4B. We use global even-indexed prompts for
fitting and odd-indexed prompts for held-out evaluation. After selecting the
fitting population with this split, we refit the deployed two-scalar map on
all 12{,}561 ancestors-accepted edges. Its coefficients are $0.67/{-}0.46$;
the split-only refit gives similar values, $0.69/{-}0.44$.
Table~\ref{tab:value} states explicitly which rows are pooled descriptions
and which use the held-out split.

The drafter provides two possible edge-acceptance predictors: its trained
confidence head $c$ and the Markov probability $q(x\mid x_p)$ assigned to
the candidate token. Table~\ref{tab:value} clearly favors the latter. The
confidence head ranks edges poorly (AUC $0.699$); adding it to a logistic
model on top of $\mathrm{logit}(q)$ changes AUC by only $0.0004$, with a
fitted weight of $0.01$. This result follows from the architecture: the
confidence head receives $(h_d,W_1[x_p])$ but not the candidate token,
so it assigns the same score to all siblings. As a path prior in the earlier
fixed-budget sweeps, it adds only ${+}0.031$--$0.064$ accepted tokens per
target forward (\tautok{}) under greedy decoding in the isolated
stored-proposal H200 rerun and reduces the end-to-end acceleration ratio
(AR; matched target-only latency divided by speculative latency) for
$N\in\{14,28,56\}$ ($2.871\times\to2.573\times$ at $N{=}56$). At
$T{=}1$ and $N{=}56$, both metrics decline: \tautok{}
$5.393\to5.292$ and AR speedup $1.918\times\to1.453\times$ (225 prompts,
three H200 repetitions). The AR comparisons in this paragraph are internal
to that stored-proposal harness and archived with the artifacts.

The Markov probability ranks edges well (pooled AUC $0.883$) but
overestimates acceptance: its mean prediction is $0.622$, compared with a
realized rate of $0.498$. A two-parameter Platt map corrects the scale, but
the fitting population matters. Path-survival products condition an edge on
all of its ancestors being accepted. Fitting on every materialized edge also
includes counterfactual contexts whose ancestors were rejected, which biases
the estimates for high-confidence edges downward. On held-out
ancestors-accepted edges, this all-edges fit has ECE $0.059$. Fitting only
on ancestors-accepted training edges reduces held-out ECE to $0.0105$. We
then refit that selected model on the pooled ancestors-accepted set for
deployment, obtaining
\begin{equation}
\hat{p}(\text{edge}) \;=\;
\sigma\!\big(a \cdot \mathrm{logit}\,q_0 + b\big), \qquad
a = 0.67,\; b = -0.46.
\label{eq:phat}
\end{equation}
The held-out split gives ECE $0.0105$ and AUC $0.948$. Adding sibling rank
provides no further improvement. The resulting value function consists of
two scalars fitted once from labels already produced during decoding.

\textbf{Generality of the two scalars.} A single fit transfers across content
domains. ECE ranges from $0.019$ to $0.035$ on the six evaluation suites;
per-domain refits improve it by at most $0.01$, and leave-one-domain-out fits
keep held-out ECE at or below $0.040$ with nearly unchanged coefficients
($a\in[0.66,0.71]$). The model pair has a larger effect: scale-specific
refits yield $a=0.69/0.72/0.79$ for Qwen3-4B/8B/14B, with test ECE
$0.008$--$0.015$. Calibration is therefore specific to a drafter--target
pair rather than a domain. No temperature-specific refit is needed. Sampled
expansion draws from the temperature-scaled conditional $q_T$, which must be
reconstructed for lossless verification (\S\ref{sec:sampling}), while the
calibrator receives the corresponding unscaled $q_0$, the input on which it
was fitted. Both are recovered analytically from the same logits. Feeding
$q_T$ to the calibrator instead creates an input mismatch and removes the
observed robustness across temperatures (\S\ref{sec:eval-temp2}).

\subsection{Calibrated adaptive budgets}
\label{sec:adapt}

For a calibrated $\hat p$, a candidate node's expected contribution to
accepted length is its \emph{path survival},
$\prod_{e\in\text{path}}\hat p(e)$. This quantity provides both a priority
for best-first expansion and a stopping signal that a fixed node budget
cannot express. In the adaptive variant of \method{}, cumulative
$\log\hat p$ determines heap order, and expansion stops when the best
remaining path survival falls below $\theta$, subject to a node cap of 64.
The threshold has a direct interpretation as the minimum expected accepted
tokens per additional verified token. A serving scheduler can therefore set
this marginal price from the current load (\S\ref{sec:serving}). Because
$\hat p$ and $\theta$ affect only tree shape, they do not change the
losslessness result in Proposition~\ref{prop:lossless}. Easy rounds receive
larger trees, while difficult rounds approach a short chain; the budget is
determined per round rather than by a fixed offline choice.

\section{Load-Aware Tree Shaping for Serving}
\label{sec:serving}

Better offline acceptance does not necessarily translate into higher serving
goodput. At batch size 1, verifying $56$ tokens costs only slightly more than
verifying $7$. At higher concurrency, however, verification FLOPs compete
across requests, and oversized trees can reduce aggregate goodput
\citep{liu2024smartspec}. The tree budget needed to outperform the chain also
increases with temperature (\S\ref{sec:eval-temp}). Because all tree shapes
share one drafter-backbone pass, \method{} can vary the shape from a
$56$-node tree to a $7$-node chain, or disable speculation, without changing
the drafter. We extend DSpark's load-aware control of draft length to
load-aware control of tree shape.

\begin{algorithm}[t]
\caption{Price-ladder construction (offline, once per deployment)}
\label{alg:ladder}
\begin{algorithmic}[1]
\REQUIRE engine, measured step-latency model $L(\text{bs}, N)$, operating grid
$\mathcal{B} \times \mathcal{T}$, chain baseline
\FOR{$(\text{bs}, T) \in \mathcal{B} \times \mathcal{T}$}
  \STATE $\Theta \gets$ neighborhood of the cost-model optimum
    $\arg\max_{\theta}\; \text{bs} \cdot
    \tautok{}(\theta, T) \,/\, L(\text{bs}, \bar N(\theta, T))$
    \hfill $\triangleright$ cost model proposes candidate prices
  \FOR{$\theta \in \Theta$}
    \STATE $\mathrm{margin}(\theta) \gets$ same-GPU interleaved A/B
      vs.\ the chain: engines prebuilt, arms alternating rep by rep,
      warm-up rep discarded, paired diffs
      \hfill $\triangleright$ measurement selects the deployed price
  \ENDFOR
  \IF{$\max_\theta \mathrm{margin}(\theta) > 0$ with sign-consistent
    wins}
    \STATE $\theta^{*}(\text{bs}, T) \gets \arg\max_\theta
      \mathrm{margin}(\theta)$
  \ELSE
    \STATE $\theta^{*}(\text{bs}, T) \gets \bot_{\rm chain}$
      \hfill $\triangleright$ sentinel selecting $K{=}1$, $N{=}b$
  \ENDIF
\ENDFOR
\STATE \textbf{return} the policy ladder; at serving time $\theta$ (or the
  chain sentinel) is a lookup per round, re-derived only when hardware,
  drafter, or temperature
  regime changes
\end{algorithmic}
\end{algorithm}

The controller has three components:

\begin{enumerate}
\item \textbf{Cost model.} We express measured step latency as
  $L(\text{bs},N)$, where $N$ is the number of verified tree nodes and
  $T$ remains reserved for sampling temperature.
\item \textbf{Per-request budget allocation.} In each round, the scheduler
  chooses $\{N_i\}$ to maximize expected accepted tokens per unit step time.
  Calibrated path survival (Eq.~\ref{eq:phat}) gives the expected contribution
  of each candidate. Greedy water-filling suffices for the concave
  per-request gain curves; under a uniform marginal price, this allocation
  reduces exactly to the $\theta$ stopping rule. Thus, the scheduler controls
  tree size by setting $\theta$ from the current load.
\item \textbf{Graceful degradation.} As load increases, the optimizer
  contracts trees toward chains and then toward no speculation, recovering
  DSpark's trimmed-chain scheduler as a one-dimensional special case.
  Proposition~\ref{prop:lossless} applies to every such shaping policy. In
  the measured engine at $\text{bs}{=}16$, the best branching trees trail the
  chain by $2.4\%$ at $T{=}1$ and $1.5\%$ at $T{=}0.5$. The policy therefore
  selects the code-identical chain member, giving parity in those cells.
  Algorithm~\ref{alg:ladder} specifies the construction,
  Table~\ref{tab:ladder} reports the measured ladder, and
  Figure~\ref{fig:adapt} shows how the selected price and per-request tree
  sizes change with the active batch.
\end{enumerate}

An independent final-recompute H200 reproduction of the
Table~\ref{tab:temp} protocol, archived with the artifacts, supports both the
result and the need for
deployment-specific calibration. The adaptive tree retains
${+}5.7$--$12.8\%$ throughput over the chain at every temperature, while
the hardware changes the $T{=}0$ wall-clock optimum from $\theta{=}0.02$ to
$\theta{=}0.05$. Algorithm~\ref{alg:ladder} therefore rebuilds the ladder
for each deployment rather than fixing it globally.

\begin{table}[t]
\caption{Measured price ladder $\theta^{*}(\text{bs},T)$ constructed by
Algorithm~\ref{alg:ladder}. Each strict-tree entry reports the selected
$\theta$, its wall-clock margin over the chain, and its paired win count
(A100 engine; same-GPU interleaved A/B runs with prebuilt engines and the
warm-up repetition discarded). Each selected $\theta$ also beat the
neighboring probes. At $\text{bs}{=}16$ and $T{>}0$, no branching tree was
faster than the chain. At $T{=}1$, the probes
$\theta\in\{0.05,0.10,0.15,0.4\}$ trailed by $12/4.6/2.4/7\%$; at
$T{=}0.5$, $\theta{=}0.15$ trailed by $1.5\%$ (0/5). The ladder therefore
selects its chain member in those cells, giving parity by code identity.
Intermediate batch sizes use the shipped interpolation stated below;
Fig.~\ref{fig:serving}(a) provides modeled context rather than a
cell-by-cell certification.}
\label{tab:ladder}
\vskip 0.1in
\centering
\scriptsize
\setlength{\tabcolsep}{2.5pt}
\resizebox{\columnwidth}{!}{%
\begin{tabular}{lccc}
\toprule
& $T{=}0$ & $T{=}0.5$ & $T{=}1.0$ \\
\midrule
$\text{bs}{=}1$ & $0.02$: ${+}7.5\%$ (5/5) & $0.05$: ${+}6.1\%$ (3/3) &
  $0.05$: ${+}3.0$--$3.4\%$ (7/8) \\
$\text{bs}{=}16$ & $0.15$: ${+}2.1\%$ (4/4) & chain (parity) & chain (parity) \\
\bottomrule
\end{tabular}
}
\vskip -0.1in
\end{table}

\begin{figure*}[t]
\centering
\includegraphics[width=0.9\textwidth]{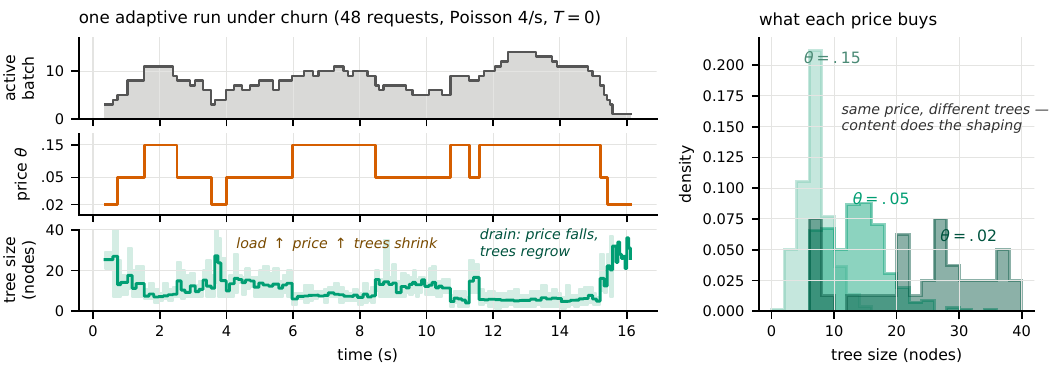}
\caption{Round-by-round $\theta$ adaptation under churn (48 requests,
Poisson arrivals at $4$/s, $T{=}0$, A100). \textbf{Left:} The ladder reads
the active batch each round. Trees shrink as the batch fills and grow again
as it drains; near the end of the run, the controller returns to the
${\sim}25$-node regime associated with $\theta{=}0.02$.
\textbf{Right:} Distribution of tree sizes at each selected price. Higher
$\theta$ shifts the distribution toward the chain, while variation within
each price reflects the survival profile of the individual request.}
\label{fig:adapt}
\end{figure*}

\begin{figure*}[t]
\centering
\includegraphics[width=0.62\textwidth]{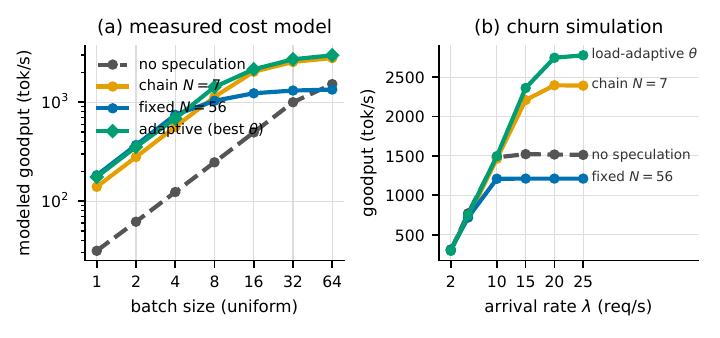}
\caption{Serving results at $T{=}0$; colors denote the same policy in both
panels. \textbf{(a)}~Combining measured step costs with measured acceptance shifts
the best shape from the largest tree in the memory-bound regime
($\text{bs}{\le}4$) to small adaptive trees in the compute-bound regime
($\text{bs}{\ge}16$); the adaptive curve is the modeled best $\theta$ at
each load. \textbf{(b)}~In a scheduler-level simulation with
request churn, each fixed shape saturates at a different rate. The
load-adaptive tree reaches the highest capacity, about ${+}15\%$ over the chain;
the fixed $N{=}56$ tree, which is optimal at $\text{bs}{=}1$, has the
lowest capacity under load. Real-engine measurements in the text confirm
the same trends.}
\label{fig:serving}
\end{figure*}

\noindent\textbf{Measured cost model.} At $T{=}0$, we directly measure decode-step
latency $L(\text{bs},k)$ on an A100-40GB with Qwen3-4B in bf16 and
512-token cached prefixes, using the median of 15 measurements. We combine
these latencies with the per-request $(\tautok{},\text{verified})$
operating points from \S\ref{sec:eval-adapt} to estimate goodput as
$\text{bs}\cdot\tautok{}/L$ (Fig.~\ref{fig:serving}(a); the full grid appears
in Table~\ref{tab:goodput}). Up to $\text{bs}{=}4$, latency remains
$34$--$38$\,ms for both $k{=}1$ and $k{=}56$, so the largest tree performs
best. From $\text{bs}{=}16$ onward, verification becomes compute-bound; at
$\text{bs}{=}64$, $k{=}56$ costs $7.3\times$ as much as $k{=}1$.
Consequently, the offline-optimal $N{=}56$ tree becomes net-negative,
whereas the modeled best-$\theta$ envelope follows the upper frontier. At
$\text{bs}{=}64$, it provides ${+}7\%$ goodput over the best fixed shape
of any size, and no single fixed shape remains within $20\%$ of the envelope
at every load.

\noindent\textbf{Scheduler-level simulation under churn.} We next simulate
continuous batching using measured costs. Requests arrive as a Poisson
process into a 64-slot batch; each step uses $L(\text{bs},\bar{k})$ from the
cost model. Fixed policies replay each prompt's measured rounds in order.
For the adaptive policy, each step selects $\theta$ from the current batch
size and samples a round from the same prompt's trace under that $\theta$,
preserving prompt-level difficulty after a policy switch. We use
$\theta{=}0.02$ for $\text{bs}{\le}4$, $0.05$ for
$5{\le}\text{bs}{\le}8$, and $0.15$ above 8, matching the $T{=}0$
dynamic-engine controller. Each fixed shape reaches a different saturation
point: $N{=}56$ at $1.2$k tok/s, $N{=}28$ at $1.8$k, and the chain at
$2.4$k. The load-adaptive policy reaches $2.77$k tok/s, ${+}15\%$ over the
chain and ${+}82\%$ over no speculation. At the two highest offered loads,
it exceeds the best fixed $\theta$ selected in hindsight by ${+}3.4$--$4.5\%$
and lowers P99 completion latency by $48\%$ relative to the chain at
$\lambda{=}20$\,req/s. The simulation
excludes drafter cost and prefill contention; their effect in a production
deployment remains to be measured.

\noindent\textbf{Real dynamic batching in a lossless research engine.}
Finally, we implement a minimal dynamic-batching engine. It uses a
preallocated per-row KV buffer, writes every tree token to a fresh offset,
and never moves tokens. Rejected nodes never enter the row's valid set, and
a per-step ancestor-bitmask attention mask defines visibility. Requests are
prefilled on admission, verified in one physically batched forward pass per
step, and removed when complete. At batch size 1, the engine reproduces the
reference implementation token-for-token.

On an A100 at $T{=}0$, a pool of 48 requests spends $14.6$\,s in cumulative
target-model forwards under target-only decoding and $4.3$\,s under adaptive
\method{}, a $3.4\times$ reduction. End to end, the chain improves throughput
over no speculation by ${+}16\%$ ($326$ versus $281$ tok/s), the fixed
$N{=}56$ tree is net-negative at $224$ tok/s, and the adaptive policy exceeds
the chain by the certified ${+}2.1\%$ in Table~\ref{tab:ladder} as $\theta$
contracts its trees toward chains.

At batch size 1, the same dynamic-engine harness gives the chain a
$4.43\times$ AR speedup over its matched target-only path, consistent with
DSpark's published serving factor. The adaptive tree adds ${+}11\%$
throughput ($119.6$ versus $108.1$ tok/s) and reduces cumulative time in
target-model forwards from $8.2$ to $6.9$\,s: per-round target cost is flat
in $k$ at this batch size, while higher \tautok{} requires fewer rounds. With
both configurations CUDA-graphed, the gain is ${+}7\%$.

Every wall-clock comparison in these dynamic-engine measurements uses
same-GPU interleaved A/B runs with prebuilt
engines, alternating repetitions, and the warm-up discarded. Cross-GPU
comparisons on the shared host showed $5$--$70\%$ tenant noise and are not
used. Cross-request batched drafting and production kernels are still needed
for deployment; the validated scheduler interface they require is $\theta$
as a per-round price.

\section{Offline Evaluation}
\label{sec:eval}

\subsection{Setup}
\label{sec:setup}

\textbf{Models.} We evaluate Qwen3-4B, Qwen3-8B, and Qwen3-14B as target
models. Each target uses the corresponding official DeepSpec checkpoints:
\texttt{dspark\_\allowbreak qwen3\_\allowbreak *\_\allowbreak block7}, a
semi-autoregressive drafter with
$b{=}7$ and $r{=}256$, and
\texttt{dflash\_\allowbreak qwen3\_\allowbreak *\_\allowbreak block7},
the same architecture with the
Markov and confidence heads removed. Both drafter families were trained with
the same pipeline and data, isolating the mechanism studied here.

\textbf{Benchmarks.} The primary evaluation uses 25 prompts from each of
GSM8K, MATH-500, HumanEval, MBPP, MT-Bench, and Alpaca, matching the
drafters' released evaluation suites. We generate 200 new tokens per prompt
with the chat template and thinking disabled, as in drafter training. We
report \tautok{}, the number of generated tokens per target forward, equal
to accepted draft length plus one. Every configuration is lossless; under
greedy decoding, all methods therefore produce identical outputs, leaving
\tautok{} as the quantity that varies.

\textbf{Supplemental protocol.} We additionally evaluate AIME25,
LiveCodeBench, and Arena-Hard-v2. The sampling evaluation uses 25 prompts per
task, while the adaptive evaluation uses 17 calibration-disjoint prompts per
task. Wall-clock results are means over three full H200 repetitions in bf16
with SDPA. Timing includes prompt processing through output sampling but
excludes model and dataset loading. We report \tautok{}, AR speedup over the
matched target-only decoder, and verified-node utilization, defined as
committed draft-path nodes divided by verified non-root nodes. For all H200
measurements, the AR denominator is our purpose-built, KV-cached target-only
decoder with the same model, prompts, sampling parameters, token limit, and
timing boundary.

\textbf{Methods.} We compare the DFlash chain; DDTree, a marginal best-first
tree on DFlash, with $N\in\{7,14,28,56\}$; the DSpark chain; and \method{}
at the same budgets. We also evaluate a marginal-scored tree on the DSpark
drafter as the ablation in finding (1). A chain is a linear tree with
$K{=}1$ and $N{=}7$, so verification-cost comparisons at equal $N$ are
exact.

\subsection{Main results: parent conditioning makes semi-AR trees effective}
\label{sec:eval-main}

\begin{table*}[t]
\caption{Accepted tokens per target forward (\tautok{}) under greedy
decoding on 150 prompts from the original six benchmarks. Budgets count
verified draft tokens per round ($N$ is a fixed per-round node quota; the
$\theta$ stop is not used here, so this isolates parent conditioning at
budgets matched to DDTree --- \S\ref{sec:eval-adapt} compares quota
against price). Chains cannot exceed $N{=}7$: the block-7 drafter yields
seven depths per backbone pass, and only branching spends more.
Markov-conditioned trees improve monotonically with budget and outperform
marginal DDTree at every matched budget. Finding (1) reports the
marginal-scored DSpark ablation, which underperforms the chain.}
\label{tab:main}
\vskip 0.1in
\centering
\small
\begin{tabular}{llcccc}
\toprule
Target & Method & $N{=}7$ & $N{=}14$ & $N{=}28$ & $N{=}56$ \\
\midrule
\multirow{4}{*}{Qwen3-4B}
& DFlash chain            & 3.86 & --   & --   & --   \\
& DDTree                  & 4.18 & 4.60 & 4.89 & 5.13 \\
& DSpark chain            & 4.39 & --   & --   & --   \\
& \method{}               & \textbf{4.57} & \textbf{5.18} & \textbf{5.50} & \textbf{5.81} \\
\midrule
\multirow{4}{*}{Qwen3-8B}
& DFlash chain            & 3.96 & --   & --   & --   \\
& DDTree                  & 4.25 & 4.66 & 4.98 & 5.19 \\
& DSpark chain            & 4.58 & --   & --   & --   \\
& \method{}               & \textbf{4.79} & \textbf{5.37} & \textbf{5.78} & \textbf{6.02} \\
\midrule
\multirow{4}{*}{Qwen3-14B}
& DFlash chain            & 4.13 & --   & --   & --   \\
& DDTree                  & 4.39 & 4.80 & 5.13 & 5.33 \\
& DSpark chain            & 4.61 & --   & --   & --   \\
& \method{}               & \textbf{4.83} & \textbf{5.45} & \textbf{5.88} & \textbf{6.11} \\
\bottomrule
\end{tabular}
\vskip -0.1in
\end{table*}

Table~\ref{tab:main} summarizes the greedy results. Three findings hold at
every model scale and on every benchmark.

\textbf{(1) Marginal trees fail on the evaluated DSpark drafter.} Applying
DDTree's construction to DSpark gives $2.60$--$2.72$ \tautok{} at $N{=}7$ and only
$3.14$--$3.33$ at $N{=}28$. Thus, even at $4\times$ the chain's budget, the
marginal tree remains below the $N{=}7$ chain; it also remains below at
$8\times$ budget (not shown). The co-trained backbone marginals do not rank
branches effectively, so most additional nodes are assigned to poor paths.
On DFlash, where marginals are the sole training signal, DDTree behaves as
expected in our runs: it improves over the chain by ${+}6$--$8\%$ at equal
budget, with larger gains at larger budgets. The failure is therefore
specific to the semi-autoregressive co-training setup, not to tree drafting.

\textbf{(2) Markov conditioning makes trees effective.} At equal
verification budget, \method{} outperforms the chain ($4.57$ versus $4.39$
on 4B), and its \tautok{} increases monotonically with budget. At $N{=}56$,
it improves over the chain by ${+}31$--$33\%$. Relative to DDTree at the
same budget, the gain is ${+}9$--$16\%$. Moreover, \method{} at $N{=}14$
matches or exceeds DDTree at $N{=}56$, reducing verification budget by
$4\times$. This gap becomes important when verification compute is contended
(\S\ref{sec:serving}).

\textbf{(3) The trained confidence prior adds little.} When added to
\method{} in the sweep (not shown), it improves greedy \tautok{} on 4B by
only ${+}0.03$--$0.07$. The gain decreases to ${+}0.00$--$0.03$ on 14B,
and the prior is harmful under sampling (\S\ref{sec:eval-temp}). The final
method therefore omits it and uses only the calibrated $q$ score
(\S\ref{sec:value}).

\textbf{Significance.} We use a paired per-prompt bootstrap with 10k
resamples. The archived test evaluates the sweep's strongest variant, which
still included the subsequently removed confidence prior whose increment is
at most $0.07$. Against DDTree at $N{=}56$, the gain is ${+}0.75$
\tautok{} on 4B (95\% CI $[+0.69,+0.81]$; wins on $145/150$ prompts) and
${+}0.86$ on 8B ($[+0.79,+0.94]$; $147/150$ wins). The corresponding
prior-free mean margins in Table~\ref{tab:main} are ${+}0.68$ and ${+}0.83$
at $N{=}56$, both substantially larger than the reported interval widths.

\subsection{Adaptive budgets dominate fixed budgets}
\label{sec:eval-adapt}

\begin{table*}[t]
\caption{Calibrated adaptive evaluation at $T{=}0$ on the nine-task suite
(153 held-out prompts per model, three H200 repetitions). Verified is the
mean number of non-root draft nodes per round; AR is speedup over the matched
target-only decoder; Util.\ is the fraction of verified nodes on the
committed path; Uncommitted is $1-\text{Util.}$ Bold marks the highest AR
among displayed rows. Fixed-budget H200 rows are omitted for compactness; the
best omitted AR is $3.463\times$ for $N{=}14$ on 14B, compared with
$3.458\times$ for \method{} $\theta{=}0.05$. Separate Qwen3-4B frontiers
under the 102-prompt protocol appear in Table~\ref{tab:temp2} and
Fig.~\ref{fig:frontier}. Per-dataset \tautok{} on the three additional tasks
appears in Table~\ref{tab:added} (App.~\ref{app:tables}).}
\label{tab:adapt}
\vskip 0.1in
\centering
\scriptsize
\begin{tabular}{llrrrrr}
\toprule
Target & Config & Verified & \tautok{} & AR speedup & Util. & Uncommitted \\
\midrule
\multirow{4}{*}{Qwen3-4B}
& chain $N{=}7$ &  7.00 & 4.961 & 3.124$\times$ & 51.23\% & 48.77\% \\
& \method{} $\theta{=}0.15$ &  6.96 & 5.179 & 3.198$\times$ & 56.92\% & 43.08\% \\
& \method{} $\theta{=}0.05$ & 12.99 & 5.795 & \textbf{3.373$\times$} & 33.93\% & 66.07\% \\
& \method{} $\theta{=}0.02$ & 23.17 & 6.150 & 3.318$\times$ & 20.19\% & 79.81\% \\
\midrule
\multirow{4}{*}{Qwen3-8B}
& chain $N{=}7$ &  7.00 & 5.183 & 3.158$\times$ & 53.13\% & 46.87\% \\
& \method{} $\theta{=}0.15$ &  7.02 & 5.356 & 3.153$\times$ & 58.63\% & 41.37\% \\
& \method{} $\theta{=}0.05$ & 13.44 & 5.979 & \textbf{3.391$\times$} & 33.88\% & 66.12\% \\
& \method{} $\theta{=}0.02$ & 24.30 & 6.286 & 3.278$\times$ & 19.51\% & 80.49\% \\
\midrule
\multirow{4}{*}{Qwen3-14B}
& chain $N{=}7$ &  7.00 & 5.110 & 3.165$\times$ & 52.71\% & 47.29\% \\
& \method{} $\theta{=}0.15$ &  7.20 & 5.368 & 3.273$\times$ & 58.01\% & 41.99\% \\
& \method{} $\theta{=}0.05$ & 13.74 & 5.936 & \textbf{3.458$\times$} & 33.27\% & 66.73\% \\
& \method{} $\theta{=}0.02$ & 24.05 & 6.273 & 3.417$\times$ & 20.00\% & 80.00\% \\
\bottomrule
\end{tabular}

\vskip -0.1in
\end{table*}

Figure~\ref{fig:frontier} evaluates the calibrated stopping rule from
\S\ref{sec:adapt} on Qwen3-4B using 102 held-out prompts, 17 from each of the
original six benchmarks. The prompts are disjoint from calibration, and
decoding is greedy.\footnote{This protocol uses 150 new tokens and
calibration-disjoint prompts, unlike Table~\ref{tab:main}; absolute
\tautok{} values must therefore not be compared across the two evaluations.}
Table~\ref{tab:adapt} extends the evaluation to all nine tasks and all three
model scales, adding wall-clock and node-accounting metrics.

Three observations emerge. \textbf{(1) Strict dominance on the six-task
frontier.} At every operating point in Fig.~\ref{fig:frontier}, the adaptive
policy improves both accepted length and verification cost relative to the
nearest fixed budget (paired bootstrap, 10k resamples). Against $N{=}28$, it
gains ${+}0.073$ \tautok{} (95\% CI $[+0.031,+0.115]$) while verifying
$5.2$ fewer tokens per round (CI $[-6.2,-4.1]$). Against $N{=}14$, it gains
${+}0.058$ ($[+0.018,+0.099]$) while verifying $1.1$ fewer tokens. Against
$N{=}7$, it gains ${+}0.089$ ($[+0.036,+0.145]$) at equal cost. At
$\theta{=}0.02$, the adaptive tree comes within $0.12$ \tautok{} of fixed
$N{=}56$ while reducing verification cost by $59\%$. Per-round tree size
ranges from $7$ nodes at p10 to $38$ at p90, showing that the policy assigns
larger budgets to rounds that can use them.

\textbf{(2) Calibration is essential to the stopping rule.} Using the same
rule with the all-edges-fit calibrator, whose fitting population is biased for
the estimand in Table~\ref{tab:value}, is worse at every $\theta$. The
corrected fit gains ${+}0.14$ to ${+}0.43$ \tautok{} in paired comparisons,
with every CI excluding zero. The difference is largest where the threshold
is most restrictive: at $\theta{=}0.15$, \tautok{} is $5.35$ rather than
$4.92$. Miscalibration also changes the operational meaning of $\theta$: the
same threshold produces a smaller tree and a lower-\tautok{} operating point.
CondAdaptive reports related budget saturation \citep{dominotree2026}. In our
setting, fitting two scalars on the appropriate conditional population
corrects the problem without changing the stopping rule.

\textbf{(3) Reordering alone provides little benefit.} At fixed $N$,
replacing the $\log q$ heap priority with $\log\hat p$ improves \tautok{} by
only ${+}0.03$--$0.04$; the confidence intervals exclude zero at $N{=}14$
and $N{=}56$. The calibrated magnitude matters when absolute values determine
stopping and allocation, but contributes little when only rank is used
(cf.\ \S\ref{sec:value}).

The wall-clock results in Table~\ref{tab:adapt} show the same tradeoff across
scales. \method{} $\theta{=}0.05$ is the fastest adaptive operating point at
each scale, reaching $3.37\times/3.39\times/3.46\times$ AR on 4B/8B/14B,
compared with $3.12\times$--$3.17\times$ for the chain. It is the overall
maximum on 4B and 8B; on 14B, it trails the fixed $N{=}14$ maximum by only
$0.005\times$ ($3.458\times$ versus $3.463\times$), while verifying slightly fewer
nodes and attaining higher \tautok{}. The adaptive operating point commits
$34\%$ of its verified nodes, compared with $17\%$ for fixed $N{=}28$.
Fixed $N{=}56$ has the highest \tautok{} among fixed budgets but falls to
$2.8\times$--$3.0\times$ AR. Thus, \tautok{} and wall-clock rankings do not
coincide: within the adaptive family, $\theta{=}0.05$ provides the best
measured AR point, while $\theta{=}0.02$ provides the best measured
\tautok{} point.

The conditional calibrator remains accurate across model scales, with
held-out ECE $0.0083/0.0141/0.0150$ for 4B/8B/14B. Fixed $N{=}56$, although
best in \tautok{} among the fixed budgets, is slower end to end than the
mid-budget operating points at every scale. This is the
acceptance--verification-cost tradeoff that \tautok{} alone does not capture.

\subsection{Sampling: losslessness and temperature behavior}
\label{sec:eval-temp}

Table~\ref{tab:lossless} provides an empirical check of
Proposition~\ref{prop:lossless}. The sampled tree's TV distance, $0.047$, is
comparable to the finite-sample deviation of direct target sampling,
$0.066$, whereas the deterministic top-$k$ control has TV distance $0.46$.
For example, the control emits a token with target probability $p{=}0.344$
only $6\%$ of the time. We deliberately choose a position with a diffuse
target distribution so that the test is sensitive to this bias.

\begin{table}[t]
\caption{DSpark chain and adaptive \method{} with the final recompute
implementation (A100, six-task held-out suite, 102 prompts, one same-GPU
sequential pass per temperature). AR is wall-clock speedup over the matched
target-only decoder in the same cache-free reference harness (both arms
cache-free, unlike the KV-cached H200 protocol of
Table~\ref{tab:adapt}), which runs at $30.3$ tok/s for $T{=}0$ and
$28.8$ tok/s for $T{>}0$. The adaptive policy uses the batch-size-1 ladder
value: $\theta{=}0.02$ at $T{=}0$ and $\theta{=}0.05$ otherwise. These AR
values are comparable only within this recompute harness. Earlier
stored-proposal fixed-$N$ ladders remain available in the artifacts.}
\label{tab:temp}
\vskip 0.1in
\centering
\scriptsize
\setlength{\tabcolsep}{2.5pt}
\begin{tabular}{lrrrrrr}
\toprule
& \multicolumn{2}{c}{$T{=}0$} & \multicolumn{2}{c}{$T{=}0.5$} & \multicolumn{2}{c}{$T{=}1.0$} \\
\cmidrule(lr){2-3}\cmidrule(lr){4-5}\cmidrule(lr){6-7}
Decoder & \tautok{} & AR & \tautok{} & AR & \tautok{} & AR \\
\midrule
DSpark chain $N{=}7$ & 5.07 & 3.30$\times$ & 5.13 & 3.14$\times$ & 4.97 & 3.06$\times$ \\
\method{} & \textbf{6.34} & \textbf{3.75$\times$} & \textbf{5.92} & \textbf{3.38$\times$} & \textbf{5.72} & \textbf{3.38$\times$} \\
\bottomrule
\end{tabular}
\vskip -0.1in
\end{table}

Table~\ref{tab:temp} reports end-to-end results for the final recompute
implementation. The adaptive tree improves \tautok{} over the DSpark chain
at every temperature: by ${+}25\%$ at $T{=}0$ and by ${+}15\%$ under
sampling. It is also faster in wall-clock time. AR speedup is
$3.75\times$ versus $3.30\times$ at $T{=}0$, $3.38\times$ versus
$3.14\times$ at $T{=}0.5$, and $3.38\times$ versus $3.06\times$ at $T{=}1$,
corresponding to gains of ${+}14\%$, ${+}8\%$, and ${+}10\%$.

The earlier stored-proposal sweep showed the reverse within-harness
ranking --- trees topped \tautok{} while the chain stayed fastest end to
end; \S\ref{sec:eval-temp2} traces this to per-node bookkeeping that the
final recompute verifier eliminates.

The greedy-trained confidence priority is harmful throughout sampled
decoding, so the final \method{} omits it and uses the calibrated Markov
score $\hat p(q_0)$ at every temperature.

\subsection{The unified method at temperature}
\label{sec:eval-temp2}

At matched verification cost, the adaptive tree lies above the fixed-budget
frontier by ${+}0.06$--${+}0.22$ \tautok{} at both sampled temperatures
(Table~\ref{tab:temp2}). The strongest comparisons improve both metrics. At
$T{=}1$, $\theta{=}0.02$ exceeds $N{=}28$ by ${+}0.151$ \tautok{}
(95\% CI $[+0.030,+0.278]$) while verifying $2.45$ fewer tokens per round.
At $T{=}0.5$, the corresponding gain is ${+}0.118$
($[+0.016,+0.222]$) with $3.0$ fewer verified tokens.

Two ablations identify the source of this advantage. First, supplying the
calibrator with temperature-scaled $q_T$ rather than $q_0$ enlarges trees by
$1.2$--$1.9\times$ at the same $\theta$ and reduces the advantage to a tie
with the fixed frontier. Calibration must therefore preserve the absolute
scale used by the stopping rule. Second, the verification rule matters under
sampling. As temperature increases, deterministic-draft exact-match
verification loses effectiveness. In the earlier fixed-budget sweeps at
$T{=}1$ and matched $N{=}56$, sampled trees retain ${+}11\%$ \tautok{} over
exact-match DDTree on the legacy six-task suite and ${+}7\%$ on the nine-task
suite. DominoTree likewise reports that its advantage over DDTree narrows to
ties and losses as $T$ increases \citep{dominotree2026}. Rejection sampling
from the true proposals retains the tree's acceptance advantage, reaching
${+}22\%$ \tautok{} over the chain at $T{=}1$.

The initial stored-proposal implementation kept one full-vocabulary proposal
for every node, adding overhead linear in tree size and leaving the chain
fastest within that harness. The released recompute implementation stores no
per-node proposal vectors. Instead, it reconstructs each visited parent's
proposal bit-for-bit from the backbone output and draw order
(\S\ref{sec:sampling}) and batches tree expansion. Same-GPU interleaved A/B
comparisons within the final harness show that the adaptive tree exceeds the
chain at batch size 1 by ${+}3.0$--$3.4\%$ at $T{=}1$
($\theta{=}0.05$; 7/8 paired wins across two seed sets) and by ${+}6.1\%$ at
$T{=}0.5$ (3/3). Numerical AR values from the stored-proposal and recompute
harnesses are not cross-comparable.

At $\text{bs}{=}16$ under sampling, no branching tree in the $\theta$ sweep
outperforms the chain. The ladder therefore selects its code-identical chain
member, giving parity in those cells (Table~\ref{tab:ladder}). At $T{=}0$,
where verification does not require proposal reconstruction, the adaptive
tree improves both \tautok{} and wall-clock performance at both measured
batch sizes; at $\text{bs}{=}16$, the wall-clock gain is ${+}2.1\%$ with 4/4
paired wins (Tables~\ref{tab:adapt} and~\ref{tab:ladder}).

\subsection{Cost accounting}
\label{sec:cost}

In each round, \method{} performs at most $N$ heap pops. Each pop requires
one $|V|\times r$ matrix--vector product, one softmax, and one top-$K$
operation; at $T{>}0$, child selection instead uses one $K$-draw
without-replacement sample over the vocabulary. With $r{=}256$ and
$|V|{=}152$k, this costs approximately $80$M FLOPs per node, compared with
roughly $8$~GFLOPs for each target-token forward that speculation can avoid.
The drafter still performs one backbone pass regardless of tree shape.
Verification expands the target forward from $7$ to $N$ tokens, which is the
tradeoff measured in Table~\ref{tab:main} and controlled by the scheduler in
\S\ref{sec:serving}. Stage-level timings agree with this accounting: in the
batch-size-1 engine, the complete drafter side, including backbone and
expansion, takes $7$--$16$\,ms per round, compared with $34$--$36$\,ms for
target verification (\S\ref{sec:serving}).

\section{Discussion and Limitations}
\label{sec:limits}

\textbf{Wall-clock scope.} The single-request wall-clock measurements
(Tables~\ref{tab:adapt} and~\ref{tab:temp}) run in a Python research
harness against a matched target-only decoder, not in the
continuous-batching serving implementation of \S\ref{sec:serving}.
Production batched goodput and tail latency therefore remain to be
measured.

\textbf{Block depth and conditioning.} The released drafters use $b{=}7$,
which caps the accepted length of each round. Evaluating deeper blocks
requires retraining, although the drafter training pipeline is public. The
Markov head also conditions only on the immediate parent. On deeper branches,
its predictions therefore approach marginal behavior, consistent with the
measured diminishing returns with depth.

\textbf{Value estimation.} A candidate-token-aware confidence head trained
for this setting is a natural extension of the current two-scalar
calibration. We evaluate calibration across three model scales, held-out
domains, and $T\in\{0,0.5,1\}$ using analytic $q_0$ as input. One zero-shot
mismatch remains: calibration labels use greedy token matches, whereas
verification at $T{>}0$ uses the rejection rule. The temperature results
suggest that this mismatch is small, but we do not isolate it directly.

\textbf{Coverage.} We have not evaluated the Gemma4-12B model pair or
conducted a same-environment comparison with EAGLE-3.

\textbf{Concurrent work.} Conditional tree drafting is developing rapidly,
including DominoTree, PCTree, and JetSpec. The components introduced
here---lossless sampled expansion, acceptance calibration with the correct
fitting population and input, and a budget interface for serving---are
independent of the particular drafter and are designed to apply to models
with inexpensive conditional heads.

\section{Conclusion}
\label{sec:conclusion}

Draft trees can be combined with semi-autoregressive drafters, but they must
use the drafter's conditional predictions: marginal tree constructions fail
on the co-trained backbones studied here. \method{} turns DSpark's Markov
head into a provably lossless tree drafter without additional backbone
passes, improving over DDTree by $9$--$16\%$ at equal verification budget
across three model scales. Per-edge acceptance is accurately estimated by a
two-scalar calibration of the drafter's conditional score, rather than by
its trained confidence head, provided that the fitting population matches
the path-survival estimand. The resulting threshold gives the serving
scheduler a per-round price for tree growth. On Qwen3-4B, across every
evaluated temperature, this rule dominates the fixed-budget
acceptance--verification-cost frontier, making tree shape a controllable
serving variable.

\bibliography{references}

\begin{thebibliography}{25}
\providecommand{\natexlab}[1]{#1}
\providecommand{\url}[1]{\texttt{#1}}
\expandafter\ifx\csname urlstyle\endcsname\relax
  \providecommand{\doi}[1]{doi: #1}\else
  \providecommand{\doi}{doi: \begingroup \urlstyle{rm}\Url}\fi

\bibitem[Cai et~al.(2024)Cai, Li, Geng, Peng, Lee, Chen, and
  Dao]{cai2024medusa}
Cai, T., Li, Y., Geng, Z., Peng, H., Lee, J.~D., Chen, D., and Dao, T.
\newblock Medusa: Simple {LLM} inference acceleration framework with multiple
  decoding heads.
\newblock In \emph{Proceedings of the 41st International Conference on Machine
  Learning (ICML)}, volume 235 of \emph{Proceedings of Machine Learning
  Research}, pp.\  5209--5235. PMLR, 2024.

\bibitem[Chen et~al.(2023)Chen, Borgeaud, Irving, Lespiau, Sifre, and
  Jumper]{chen2023accelerating}
Chen, C., Borgeaud, S., Irving, G., Lespiau, J.-B., Sifre, L., and Jumper, J.
\newblock Accelerating large language model decoding with speculative sampling.
\newblock \emph{arXiv preprint arXiv:2302.01318}, 2023.

\bibitem[Chen et~al.(2026)Chen, Liang, and Liu]{chen2026dflash}
Chen, J., Liang, Y., and Liu, Z.
\newblock {DFlash}: Block diffusion for flash speculative decoding.
\newblock In \emph{Proceedings of the 43rd International Conference on Machine
  Learning (ICML)}, 2026.
\newblock arXiv:2602.06036.

\bibitem[Chen et~al.(2024)Chen, May, Svirschevski, Huang, Ryabinin, Jia, and
  Chen]{chen2024sequoia}
Chen, Z., May, A., Svirschevski, R., Huang, Y., Ryabinin, M., Jia, Z., and
  Chen, B.
\newblock Sequoia: Scalable and robust speculative decoding.
\newblock In \emph{Advances in Neural Information Processing Systems 37
  (NeurIPS)}, 2024.

\bibitem[Cheng et~al.(2026)Cheng, Yu, Shao, Li, Xiong, Qian, Zhu, Ma, Zhang,
  Ye, Chen, Deng, Yu, Dai, Zhang, Wei, Tan, Yang, Xu, Wu, Xu, Wang, Chen, Tian,
  Bi, Hao, Chen, Cao, Zhang, Xu, Zhang, Zhao, and Liang]{dspark2026}
Cheng, X., Yu, X., Shao, C., Li, J., Xiong, Y., Qian, Y., Zhu, J., Ma, S.,
  Zhang, X., Ye, J., Chen, Q., Deng, C., Yu, J., Dai, D., Zhang, Z., Wei, Y.,
  Tan, Y., Yang, W., Xu, R., Wu, Y., Xu, Z., Wang, X., Chen, M., Tian, R., Bi,
  X., Hao, Z., Chen, S., Cao, H., Zhang, W., Xu, A., Zhang, H., Zhao, D., and
  Liang, W.
\newblock {DSpark}: Confidence-scheduled speculative decoding with
  semi-autoregressive generation.
\newblock \emph{arXiv preprint arXiv:2607.05147}, 2026.

\bibitem[Hu et~al.(2026)Hu, Feng, Wu, Yuan, Zhao, Qian, Wang, Zhao, Jiang, Zhu,
  Rosing, and Zhang]{jetspec2026}
Hu, L., Feng, Z., Wu, Y., Yuan, H., Zhao, Y., Qian, Y.-Y., Wang, B., Zhao, P.,
  Jiang, D., Zhu, Y., Rosing, T., and Zhang, H.
\newblock {JetSpec}: Breaking the scaling ceiling of speculative decoding with
  parallel tree drafting.
\newblock \emph{arXiv preprint arXiv:2606.18394}, 2026.

\bibitem[Huang et~al.(2026)Huang, Zhang, Zhang, Lin, Xu, and Zhang]{domino2026}
Huang, J., Zhang, Y., Zhang, Q., Lin, H., Xu, H., and Zhang, L.
\newblock Domino: Decoupling causal modeling from autoregressive drafting in
  speculative decoding.
\newblock \emph{arXiv preprint arXiv:2605.29707}, 2026.

\bibitem[Huang et~al.(2025)Huang, Wu, Shi, Zou, Yu, and Shi]{adaspec2025}
Huang, K., Wu, H., Shi, Z., Zou, H., Yu, M., and Shi, Q.
\newblock {AdaSpec}: Adaptive speculative decoding for fast, {SLO}-aware large
  language model serving.
\newblock In \emph{Proceedings of the 2025 ACM Symposium on Cloud Computing
  (SoCC)}, pp.\  361--374. ACM, 2025.
\newblock \doi{10.1145/3772052.3772239}.

\bibitem[Leviathan et~al.(2023)Leviathan, Kalman, and
  Matias]{leviathan2023fast}
Leviathan, Y., Kalman, M., and Matias, Y.
\newblock Fast inference from transformers via speculative decoding.
\newblock In \emph{Proceedings of the 40th International Conference on Machine
  Learning (ICML)}, volume 202 of \emph{Proceedings of Machine Learning
  Research}, pp.\  19274--19286. PMLR, 2023.

\bibitem[Li et~al.(2024)Li, Wei, Zhang, and Zhang]{li2024eagle2}
Li, Y., Wei, F., Zhang, C., and Zhang, H.
\newblock {EAGLE}-2: Faster inference of language models with dynamic draft
  trees.
\newblock In \emph{Proceedings of the 2024 Conference on Empirical Methods in
  Natural Language Processing (EMNLP)}, pp.\  7421--7432. Association for
  Computational Linguistics, 2024.
\newblock \doi{10.18653/v1/2024.emnlp-main.422}.

\bibitem[Li et~al.(2026)Li, Li, Xie, Song, and Lu]{pctree2026}
Li, Z., Li, T., Xie, C., Song, X., and Lu, H.
\newblock From chains to trees: Parent-conditioned drafting for
  semi-autoregressive speculative decoding.
\newblock \emph{arXiv preprint arXiv:2608.02123}, 2026.

\bibitem[Lin \& Jang(2026)Lin and Jang]{dominotree2026}
Lin, S.~S. and Jang, J.-S.~R.
\newblock {DominoTree}: Conditional tree-structured drafting with domino for
  speculative decoding.
\newblock \emph{arXiv preprint arXiv:2607.08642}, 2026.

\bibitem[Liu et~al.(2025)Liu, Park, Hu, Kwon, Li, Zhang, Du, Mo, You, Cheung,
  Deng, Stoica, and Zhang]{liu2024smartspec}
Liu, X., Park, J., Hu, L., Kwon, W., Li, Z., Zhang, C., Du, K., Mo, X., You,
  K., Cheung, A., Deng, Z., Stoica, I., and Zhang, H.
\newblock Turbospec: Closed-loop speculation control system for optimizing
  {LLM} serving goodput.
\newblock \emph{arXiv preprint arXiv:2406.14066}, 2025.

\bibitem[Liu et~al.(2026)Liu, Yu, Park, Stoica, and Cheung]{liu2026illusion}
Liu, X., Yu, J., Park, J., Stoica, I., and Cheung, A.
\newblock Speculative decoding: Performance or illusion?
\newblock In \emph{Proceedings of Machine Learning and Systems}, volume~8, pp.\
   1719--1741. MLSys, 2026.

\bibitem[Miao et~al.(2024)Miao, Oliaro, Zhang, Cheng, Wang, Zhang, Wong, Zhu,
  Yang, Shi, Shi, Chen, Arfeen, Abhyankar, and Jia]{miao2024specinfer}
Miao, X., Oliaro, G., Zhang, Z., Cheng, X., Wang, Z., Zhang, Z., Wong, R.
  Y.~Y., Zhu, A., Yang, L., Shi, X., Shi, C., Chen, Z., Arfeen, D., Abhyankar,
  R., and Jia, Z.
\newblock Specinfer: Accelerating large language model serving with tree-based
  speculative inference and verification.
\newblock In \emph{Proceedings of the 29th ACM International Conference on
  Architectural Support for Programming Languages and Operating Systems, Volume
  3 (ASPLOS)}, pp.\  932--949. ACM, 2024.
\newblock \doi{10.1145/3620666.3651335}.

\bibitem[Rheinboldt et~al.(2026)Rheinboldt, Berdoz, and
  Wattenhofer]{treeflash2026}
Rheinboldt, P., Berdoz, F., and Wattenhofer, R.
\newblock {TreeFlash}: Parallel {AR}-approximation for faster speculative
  decoding.
\newblock \emph{arXiv preprint arXiv:2606.03819}, 2026.

\bibitem[Ringel \& Romano(2026)Ringel and Romano]{ddtree2026}
Ringel, L. and Romano, Y.
\newblock Accelerating speculative decoding with block diffusion draft trees.
\newblock \emph{arXiv preprint arXiv:2604.12989}, 2026.

\bibitem[Sandler et~al.(2026)Sandler, Christopher, Hartvigsen, and
  Fioretto]{sandler2026specdiff2}
Sandler, J., Christopher, J., Hartvigsen, T., and Fioretto, F.
\newblock {SpecDiff}-2: Scaling diffusion drafter alignment for faster
  speculative decoding.
\newblock In \emph{Proceedings of Machine Learning and Systems}, volume~8, pp.\
   1128--1147. MLSys, 2026.

\bibitem[Shi et~al.(2026)Shi, Xu, Deng, Wu, Liu, Xu, Chen, Zhu, Xu, Huang,
  Yang, and Zhou]{specblock2026}
Shi, W., Xu, Q., Deng, F., Wu, Y., Liu, J., Xu, Y., Chen, H., Zhu, J., Xu, J.,
  Huang, X., Yang, J., and Zhou, X.
\newblock {SpecBlock}: Block-iterative speculative decoding with dynamic tree
  drafting.
\newblock \emph{arXiv preprint arXiv:2605.07243}, 2026.

\bibitem[Wang et~al.(2025)Wang, Su, Li, Xia, Ye, Duan, Wang, and
  Zhang]{wang2024opttree}
Wang, J., Su, Y., Li, J., Xia, Q., Ye, Z., Duan, X., Wang, Z., and Zhang, M.
\newblock {OPT}-tree: Speculative decoding with adaptive draft tree structure.
\newblock \emph{Transactions of the Association for Computational Linguistics},
  13:\penalty0 188--199, 2025.
\newblock \doi{10.1162/tacl_a_00735}.

\bibitem[Wang et~al.(2026{\natexlab{a}})Wang, Chen, Zhen, Liu, Zheng, Liu, Xu,
  and Li]{wang2026prism}
Wang, X., Chen, Y., Zhen, M., Liu, F., Zheng, X., Liu, X., Xu, H., and Li, M.
\newblock {PRISM}: Parametrically refactor inference for speculative decoding
  draft models.
\newblock In \emph{Proceedings of Machine Learning and Systems}, volume~8, pp.\
   2172--2185. MLSys, 2026{\natexlab{a}}.

\bibitem[Wang et~al.(2026{\natexlab{b}})Wang, Huang, and Chen]{taps2026}
Wang, Z., Huang, J., and Chen, X.
\newblock {TAPS}: Target-aware prefix tree selection for diffusion-drafted
  speculative decoding.
\newblock \emph{arXiv preprint arXiv:2606.00487}, 2026{\natexlab{b}}.

\bibitem[Yang \& Li(2026)Yang and Li]{specauf2026}
Yang, T. and Li, M.
\newblock {Spec-AUF}: Accept-until-fail training under train-inference
  misalignment for masked block drafters.
\newblock \emph{arXiv preprint arXiv:2607.01893}, 2026.

\bibitem[Zhang et~al.(2026)Zhang, Qiu, He, and Dai]{caddtree2026}
Zhang, S., Qiu, H., He, H., and Dai, Y.
\newblock Cost-aware diffusion draft trees for speculative decoding.
\newblock \emph{arXiv preprint arXiv:2606.01813}, 2026.

\bibitem[Zhao et~al.(2026)Zhao, Tang, Zhu, Ye, Chang, Lin, Park, Xiao,
  Abdelfattah, Gao, Kasikci, Han, and Stoica]{zhao2026specgen}
Zhao, Y., Tang, J., Zhu, K., Ye, Z., Chang, C.-C., Lin, C., Park, J., Xiao, G.,
  Abdelfattah, M.~S., Gao, M., Kasikci, B., Han, S., and Stoica, I.
\newblock Accelerating large-scale reasoning model inference with sparse
  self-speculative decoding.
\newblock In \emph{Proceedings of Machine Learning and Systems}, volume~8, pp.\
   251--265. MLSys, 2026.

\end{thebibliography}
\bibliographystyle{mlsys2026}

\appendix

\section{Block-Semantics Alignment Probe}
\label{app:probe}

We first obtain greedy continuations from the target. For each candidate
block semantics, we then draft one block per round and verify the resulting
greedy chain. This isolated H200 probe uses 48 prompts, eight from each
original task, and generates 150 new tokens. Its metric is the number of
committed draft tokens divided by the number of target-verification rounds.
On \texttt{dspark\_qwen3\_4b\_block7}, LM-shifted decoding yields $3.492$
accepted draft tokens per round, compared with $1.275$ under the in-place
interpretation. On \texttt{dflash\_qwen3\_4b\_block7}, the corresponding
values are $3.043$ and $0.015$. Under the incorrect in-place interpretation,
the DSpark Markov head retains enough bigram-level signal to obscure the
alignment error; DFlash is nearly degenerate. Every result in this paper uses
the verified LM-shifted semantics.

\section{Additional Tables and Figures}
\label{app:tables}

Tables~\ref{tab:temp2}--\ref{tab:lossless} and Figure~\ref{fig:frontier}
collect the deferred results referenced from the main text: the $T{>}0$
fixed-versus-\method{} grid, edge-acceptance prediction, per-dataset
\tautok{} on the added tasks, the modeled goodput grid, the
acceptance--cost frontier, and the empirical losslessness check.

\begin{table}[t]
\caption{Fixed and adaptive variants of the method in
\S\ref{sec:expansion}--\S\ref{sec:adapt} at $T>0$ (Qwen3-4B, 102 held-out
prompts). Each entry reports verified draft tokens per round / \tautok{}.
When calibration uses $q_0$, adaptive stopping lies above the matched fixed
frontier at both temperatures. Bold identifies the adaptive member of each
matched-cost comparison, not the column maximum.}
\label{tab:temp2}
\vskip 0.1in
\centering
\small
\begin{tabular}{lcc}
\toprule
Config & $T{=}0.5$ & $T{=}1.0$ \\
\midrule
chain ($N{=}7$)          &  7.0 / 5.13 &  7.0 / 4.97 \\
fixed $N{=}14$           & 13.6 / 5.87 & 14.0 / 5.59 \\
fixed $N{=}28$           & 26.4 / 6.20 & 28.0 / 5.92 \\
fixed $N{=}56$           & 51.1 / 6.52 & 56.0 / 6.28 \\
\method{} $\theta{=}0.15$ &  7.1 / \textbf{5.36} &  7.5 / \textbf{5.08} \\
\method{} $\theta{=}0.05$ & 13.2 / \textbf{5.93} & 14.6 / \textbf{5.73} \\
\method{} $\theta{=}0.02$ & 23.3 / \textbf{6.32} & 25.5 / \textbf{6.08} \\
\bottomrule
\end{tabular}
\vskip -0.1in
\end{table}

\begin{table}[t]
\caption{Acceptance prediction on 34{,}216 Qwen3-4B tree edges. Full-pool
rows are descriptive; held-out rows fit even-indexed prompts and evaluate
odd-indexed prompts. $^\dagger$ restricts evaluation to the 6{,}256
ancestors-accepted edges. The AUC change from $0.883$ to $0.948$ comes from
this population change, not Platt scaling.}
\label{tab:value}
\vskip 0.1in
\centering
\scriptsize
\begin{tabular}{lcc}
\toprule
Acceptance predictor & AUC & ECE \\
\midrule
\multicolumn{3}{l}{\emph{Full pool (descriptive)}} \\
confidence head $c$ (raw)              & 0.699 & 0.260 \\
Markov conditional $q_0$ (raw)         & 0.883 & 0.131 \\
\addlinespace
\multicolumn{3}{l}{\emph{Held-out odd-indexed prompts}} \\
$c$ $+$ Platt (all-edge fit)            & 0.702 & 0.070 \\
$q_0$ (raw)$^\dagger$                  & \textbf{0.948} & 0.042 \\
$q_0$ $+$ Platt (all-edge fit)$^\dagger$ & \textbf{0.948} & 0.059 \\
$q_0$ $+$ Platt (conditional fit)$^\dagger$
                                        & \textbf{0.948} & \textbf{0.0105} \\
\bottomrule
\end{tabular}
\vskip -0.1in
\end{table}

\begin{figure}[t]
\centering
\includegraphics[width=\columnwidth]{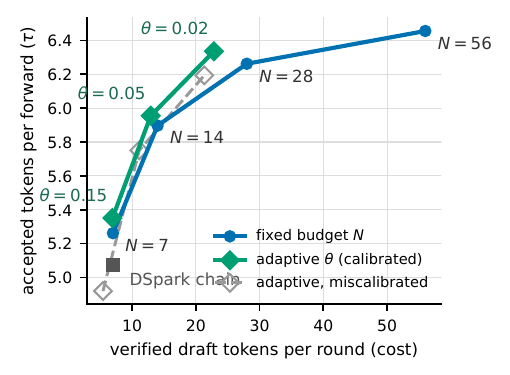}
\caption{Acceptance--cost frontier (Qwen3-4B, 102 held-out prompts, greedy).
Calibrated adaptive stopping (green) lies above the fixed-budget line (blue)
at every operating point. The same stopping rule with the miscalibrated
all-edges fit (gray, dashed) falls below it, with the largest gap at the most
restrictive threshold.}
\label{fig:frontier}
\end{figure}

\begin{table}[t]
\caption{Per-dataset \tautok{} on the three additional tasks for
\method{} $\theta{=}0.05$ in the three-scale evaluation.}
\label{tab:added}
\vskip 0.1in
\centering
\scriptsize
\begin{tabular}{lccc}
\toprule
Target & AIME25 & LiveCodeBench & Arena-Hard-v2 \\
\midrule
Qwen3-4B  & 6.398 & 6.005 & 4.337 \\
Qwen3-8B  & 6.756 & 6.279 & 4.460 \\
Qwen3-14B & 6.617 & 6.547 & 4.282 \\
\bottomrule
\end{tabular}
\vskip -0.1in
\end{table}

\begin{table}[t]
\caption{Modeled $T{=}0$ goodput (tokens/s) from measured step costs and
acceptance. Adaptive is the per-load oracle envelope over
$\theta\in\{0.02,0.05,0.15\}$, shown as an upper frontier rather than the
shipped ladder at every intermediate load; bold marks the row maximum.}
\label{tab:goodput}
\vskip 0.1in
\centering
\small
\begin{tabular}{rcccc}
\toprule
bs & no-spec & chain $N{=}7$ & fixed $N{=}56$ & adaptive \\
\midrule
 1 &   31 &  140 & \textbf{182} & 176 \\
 4 &  123 &  563 & \textbf{745} & 683 \\
 8 &  247 & 1134 & 1034 & \textbf{1425} \\
16 &  495 & 2034 & 1231 & \textbf{2153} \\
32 & 1000 & 2555 & 1312 & \textbf{2718} \\
64 & 1524 & 2798 & 1342 & \textbf{2984} \\
\bottomrule
\end{tabular}
\vskip -0.1in
\end{table}

\begin{table}[t]
\caption{Empirical losslessness at $T{=}1.0$: total variation from the
analytic target at a high-entropy position over 300 one-round trials. Direct
target sampling is the finite-sample reference.}
\label{tab:lossless}
\vskip 0.1in
\centering
\small
\begin{tabular}{lc}
\toprule
Estimator & TV to analytic $p$ \\
\midrule
Direct target sampling (reference)      & 0.066 \\
\method{} sampled tree (ours)        & 0.047 \\
Top-$k$ + naive rejection (control)  & 0.462 \\
\bottomrule
\end{tabular}
\vskip -0.1in
\end{table}

\section{Reproducibility}
\label{app:repro}

All drafter checkpoints and evaluation prompts are available through
DeepSpec. We validate every greedy configuration against direct decoding;
the linear tree ($K{=}1$) also reproduces chain drafting token-for-token.
Code is publicly available at
\url{https://github.com/PopSoda2002/TreeSpark}; the raw experiment
artifacts, the scripts that produced them, and the step-by-step
reproduction manual are attached to the same repository's v0.1.0 release
as an archive, and every reported result records its GPU metadata.

\end{document}